\begin{document}
\mainmatter              % start of a contribution
\title{Control Theoretic Approach to Fine-Tuning and Transfer Learning}
\titlerunning{Control Theoretic Approach to Fine-Tuning and Transfer Learning}  % abbreviated title (for running head)
%                                     also used for the TOC unless
%                                     \toctitle is used
%
\author{Erkan Bayram\inst{1} \and Shenyu Liu\inst{2} \and
Mohamed-Ali Belabbas\inst{1} \and Tamer Başar\inst{1}}
\authorrunning{Erkan Bayram et al.} % abbreviated author list (for running head)
%
%%%% list of authors for the TOC (use if author list has to be modified)
\tocauthor{Erkan Bayram, Shenyu Liu, Mohamed-Ali Belabbas, Tamer Başar}
\institute{Coordinated Science Laboratory, University of Illinois at Urbana-Champaign\\ Urbana, IL 61801\\
\email{(ebayram2,belabbas,basar1)@illinois.edu},\\ 
\and
The School of Automation, Beijing Institute of  Technology, Beijing 100089, China\\
\email{shenyuliu@bit.edu.cn}}

\maketitle              % typeset the title of the contribution

\begin{abstract}
Given a training set in the form of a paired $(\mathcal{X},\mathcal{Y})$, we say that the control system $\dot x = f(x,u)$ has learned the paired set via the control $u^*$ if the system steers each point of $\mathcal{X}$ to its corresponding target in $\mathcal{Y}$. If the training set is expanded, most existing methods for finding a new control $u^*$ require starting from scratch, resulting in a quadratic increase in complexity with the number of points. To overcome this limitation, we introduce the concept of {\em tuning without forgetting}. We develop {\em an iterative algorithm} to tune the control $u^*$ when the training set expands, whereby points already in the paired set are still matched, and new training samples are learned. At each update of our method, the control $u^*$ is projected onto the kernel of the end-point mapping generated by the controlled dynamics at the learned samples. It ensures keeping the end-points for the previously learned samples constant while iteratively learning additional samples.
% Our work contributes to the scalability of control methods, offering a novel approach to adaptively handle training set expansions.
% \textcolor{red}{shorten to 150}

% The abstract should summarize the contents of the paper
% using at least 70 and at most 150 words. It will be set in 9-point
% font size and be inset 1.0 cm from the right and left margins.
% There will be two blank lines before and after the Abstract. \dots
% We would like to encourage you to list your keywords within
% the abstract section using the \keywords{...} command.
\keywords{ Control of ensemble of points, geometric control, fine-tuning, control for learning}
\end{abstract}
\section{Introduction}
\label{sec:introduction}
% too long might be unnec
In recent years, neural networks and other data-driven architectures have seen a tremendous growth both in their abilities and in the size of the models they necessitate. Indeed, at the present, models with billions of parameters are not uncommon. Due to their sheer size, the amount of computation necessary to train such models is a major hurdle to their general adoption. This issue is compounded by the fact that one may want a model to be trained for an additional task or refined using newly acquired data. To avoid having to retrain the model, and the computation cost and time that ensue, methods for {\em transfer learning} and {\em fine tuning} are gaining rapid adoption. In a nutshell, these are methods that allow for updating of the parameters of a model trained on a given task to perform a new task (for transfer learning) and methods to update the parameters of a model when more examples of a given task are received (for fine-tuning). In this paper, we show that control theory provides a natural framework, as well as efficient algorithms, to perform these tasks.
Mathematically, fine-tuning and transfer learning are often treated similarly, as these methods are blind to the nature of the examples they are fed~\cite{gouk2021distance}. This also applies to the algorithms we propose here, but we focus on fine-tuning for the sake of exposition.

% \section{Contributions and related work}

The models we consider are controlled differential equations of the type
\begin{align}\label{eqn:model_intro}
    \dot x = f(x,u) \\[-2em]
\end{align}
(see Section~\ref{sec:prelim} for their use in learning and the proper definitions). For a given paired training set $(\mathcal{X},\mathcal{Y})$ with finite cardinality $q$, the learning task is to find a control $u^*$ so that the flow $\varphi_t(u^*,\cdot)$ generated by~\eqref{eqn:model_intro} satisfies 
\begin{align}\\[-2em]
   R(\varphi_T(u^*,x^i))=y^i, \mbox{ for all } (x^i,y^i)\in (\mathcal{X},\mathcal{Y})\\[-2em]
\end{align}
for some finite $T\geq0$ and a given fixed readout map $R$. This task  is also known as the control of an ensemble of points~\cite{agrachev2020control}. Thus, we interchangeably refer to the training sample set $\mathcal{X}$ as the input ensemble and the training label set $\mathcal{Y}$ as the output ensemble in line with the usage in the control theory literature.  

In this work, we address the need to expand the training set $\mathcal{X}$ by introducing new samples. Our goal is to effectively adapt a control function $u^*$ trained on the original set to the expanded training set. This challenge aligns with the main goal of {\em fine-tuning} in the learning literature, a concept where one aims to adapt parameters $u^*$ trained for a given problem by further training them on another dataset. %In this work, we consider expansion in the training set of points (input ensemble) for a dynamical system to have a control theoretic approach to fine-tuning. 

One of the main issues addressed by fine-tuning is to avoid a loss of performance on previous tasks while specializing the model to a new dataset. This problem is also known as {\em catastrophic forgetting} in the continual learning (incremental learning) literature.
Common approaches involve learning a sequence of tasks one by one and aiming to match the performance as if they were observed simultaneously~\cite{wang2024comprehensive}. However, in continual learning, access to previously observed data points is typically not available. When such access exists, it is usually discussed in the literature on robust fine-tuning.  Similar approaches to our work can be found in optimization-based continual learning methods that incorporate memory~\cite{lopez2017gradient}.

In the control theory literature, finding a control $u$ that steers each point in $\mathcal{X}$ with the cardinality $q$ to the associated labels in $\mathcal{Y}$ is done by concatenating all points in $\mathcal{X}$ in a state of higher dimension, and replicate the controlled dynamics $q$ times~\cite{agrachev2020control}. Then, this approach requires solving a controlled ordinary differential equation (CODE) for the $q$-tuple system. We call this method the {\em $q$-folded method}, details of which will be discussed later in the paper. This approach has two practical shortcomings: 
\begin{itemize}
% ıteratıve alfo has ınver problem (tunıng abılıty) ho kashyap algorithm
    \item \textbf{Tuning Ability:} If the training set is augmented with additional samples, it is necessary to apply the $q$-folded method on the new set from scratch.
    \item \textbf{Scalability:} The $q$-folded method has complexity of $\mathcal{O}(  n^2 q^2 N )$ per iteration with $N$ time-step discretization. Therefore, as the size of the training set, $q$, grows, the complexity of the $q$-folded method increases quadratically, leading to scalability issues. 
\end{itemize}

We introduce here a novel fine-tuning method. It is based on the hereby introduced concept of {\em tuning without forgetting ${\mathcal{X}}$}. The method consists of an iterative algorithm in which each update on the control function $u$, denoted by $\delta u$, is selected as a projection of the gradient of the given cost functional onto the kernel of end-point mapping generated by the controlled dynamics. It ensures keeping the end-points for the previously steered points constant up to the first order while simultaneously steering the new points of the training set to their associated labels.

Our work thus contributes to the tuning ability and scalability of the methods using controlled dynamical systems for supervised learning and addresses the challenges stemming from training set updates. We will compare our proposed method to an existing fine-tuning method in a computational example.

Furthermore, our method allows to iteratively learn the samples in $(\mathcal{X},\mathcal{Y})$, thus bypassing the need to rely on dealing with a control system of dimension scaling with $q$. This is particularly important since, as we mentioned earlier, this scaling causes computational issues, but it can additionally run into memory capacity issues as well.  These features make our approach a preferred alternative to the existing $q$-folded method in the control of an ensemble of points.

\textbf{Related Works:} A popular idea is to restrict the distance between the updated parameters and the original parameters~\cite{gouk2021distance} to avoid a loss of performance on the previous task while specializing the model to a new dataset. The works~\cite{xuhong2018explicit,li2018delta} propose to use Euclidean norm and the matrix absolute row sum matrix norm (MARS) regularization  to restrict the distance of tuned parameters from the reference parameters. The work~\cite{tian2023trainable} proposes a trainable projected gradient method (TPGM) for fine-tuning to automatically learn the distance constraints for each layer.

Other optimization-based approaches in the continual learning literature include~\cite{lopez2017gradient}, which proposes the so-called ``gradient episodic memory (GEM)'', that performs an update in weight in the direction of gradient of loss function if it has negative inner product with the gradient of loss for the previous samples. Furthermore, the works~\cite{farajtabar2020orthogonal} proposes Orthogonal Gradient Descent (OGD) that stores the gradients for the initially observed points and rectifies the current gradient to be orthogonal to them. The study~\cite{wang2021training} introduces an algorithm called Adam-NSCL, which sequentially optimizes network parameters in the null space of previous tasks. This null space is approximated by applying singular value decomposition to the covariance matrix of all input features of previous tasks for each linear layer.

The dynamical system approach, which we adopt here, has been used in the past in the learning literature. For example, the studies~\cite{chen2018neural,dupont2019augmented,lu2018beyond} consider a continuous flow and introduce neural ordinary differential equations (NODE). The works presented in~\cite{cuchiero2020deep,tabuada2022universal} focus on the universal approximation capabilities of deep residual neural networks from a geometric control viewpoint. Our work falls within the scope of these two pieces of literature.

\section{Preliminaries}\label{sec:prelim}
In this section, we introduce the problem and provide the preliminaries for the iterative algorithm. Consider the paired sets $(\mathcal{X},\mathcal{Y})=\{(x^i, y^i)\}_{i=1}^q$, where the $x^i \in \mathbb{R}^{n}$'s are called initial points so that they are in a connected Riemannian manifold $\mathcal{M}$.  and the associated $y^i \in \mathbb{R}^{n_o}$'s are target points (labels). We assume that the elements of the input ensemble (training sample set) $\mathcal{X}$ are pairwise distinct, i.e., $x^i \neq x^j$ for $i \neq j$ and  Let $\mathcal{I}$ be an index set labelling the entries of the ensembles. 
% \ebedit{ Note that we assume that the training set is on a connected manifold $\mathcal{M}$, which is a submanifold of $\mathbb{R}^n$. 
% We take the system to be linear in control (control-affine) of the form:
We take the system:
\begin{equation}\label{eqn:control_system}
    \Dot{x}(t) = f( u(t) x(t) ) , u \in L_{\infty}([0,T], \mathbb{R}^{\bar{n}\times \bar{n}})
\end{equation}
% \begin{equation}\label{eqn:control_system}
%     \Dot{x}(t)\!=\!\sum_{d=1}^{p} u_d(t) f_d(x(t)) , u_d(t)\in \mathbb{R}, u \in L_{\infty}([0,T], \mathbb{R}^p)
% \end{equation}
% where $x(t)\in\mathbb{R}^{\bar{n}}$ is the state vector at time $t$ and the $f_d(\cdot)$'s are smooth vector fields on $\mathbb{R}^{\Bar{n}}$ for $d=1,2,\cdots,p$. Let $\varphi_t(u,\cdot)$ be the flow generated by system~\eqref{eqn:control_system} with control $u(t)$. 
where $x(t)\in\mathbb{R}^{\bar{n}}$ is the state vector at time $t$ and $f(\cdot)$ is smooth vector fields on $\mathbb{R}^{\Bar{n}}$. For example, if we select $f(\cdot) = \tanh(\cdot)$, the function is applied elementwise on the product $u(t)x(t)$, resulting in $\dot{x}(t) = \tanh( u(t) x(t) )$. Let
$E:\mathbb{R}^{n} \to \mathbb{R}^{\Bar{n}}$ be an injective function with $\bar{n}\geq n$, called the uplift function, which is independent of the control $u$. Let $R:\mathbb{R}^{\bar{n}} \to \mathbb{R}^{n_o}$ be a given function, called the {\em readout map}, which is independent of the control $u$ and so that the jacobian of $R$ is of full row rank. We next define memorization property for a dynamical system.

\begin{definition}[Memorization Property] Assume that a paired set $(\mathcal{X},\mathcal{Y})$, a fixed readout map $R$ and an up-lift function $E$ are given. The control $u$ is said to have {\em memorized the ensemble $(\mathcal{X},\mathcal{Y})$} for the model $\dot{x}(t)=$ $f(x(t), u(t))$ if the following holds for a finite $T\geq0$:
\begin{align}\label{eqn:defn_fixed_ensemble}\\[-2em]
    R( \varphi_T(u,{E}(x^i)))=y^i, \forall x^i \in \mathcal{X},
\end{align}
where the flow $\varphi_t(u,\cdot)$ is generated by the system at the control function $u$.  
\end{definition} 

We suppress the subscript $T$ in the notation at $t=T$ for simplicity. We call $R(\varphi(u,E(\cdot)))$ {\em the end-point mapping}. that is, we have the following end-point mapping for a given $u$ at $x^i$:
\begin{align}\\[-2em]
    x^i \in \mathbb{R}^{n} \xrightarrow{E} \Bar{x}^i \in \mathbb{R}^{\Bar{n}}  \xrightarrow{\varphi_T(u,\cdot)} \Bar{y}^i \in \mathbb{R}^{\bar{n}} \xrightarrow{R} y^i  \in \mathbb{R}^{n_o} \\[-2em]
\end{align}
In other words, the dynamical system has memorized the ensemble if the end-point mapping maps each $x^i \in \mathbb{R}^{n}$ to the corresponding $y^i \in \mathbb{R}^{n_o}$. The learning problem turns into a ``multi-motion planning'' problem (in which we look for a control function $u$ that steers all $\Bar{x}^i$ in $\mathcal{X}$ to the corresponding $\Bar{y}^i$). One can also see that, from the deep learning perspective, the memorization property is equivalent to the universal interpolation property of neural networks~\cite{tabuada2022universal}.

% \ebedit{
For convenience, we assume that $E$ is the identity function, meaning $\overline{n}=n$, but our result holds for any injective up-lift function. Thus, we interchangeably use $x^i$ and $E(x^i)$ for any $x^i \in \mathcal{X}$. If $n=n_o$ and $R$ is the identity map, we call the problem the control of an ensemble of points with {\em fixed end-points}. Otherwise, we call the problem the control of an ensemble of points with {\em partially constrained end-points}. In this work, we consider the latter. For the sake of simplicity, we let the map $R$ be the orthogonal projection of a point in $\mathbb{R}^{n}$ onto the subspace $\mathbb{R}^{n_o}$; precisely, $R: x \in \mathbb{R}^{n} \mapsto C x \in \mathbb{R}^{n_o}$ where $C = [  0_{n_o \times n-n_o}~I_{n_o \times n_o} ]\in\mathbb{R}^{n_o \times n}$.  We define per-sample cost functional $\mathcal{J}^i(u)$ for a given point $x^i$ as follows:
 \begin{align}\label{eqn:per_sample}\\[-2em]
     \mathcal{J}^i(u) = \frac{1}{2}\|C\varphi(u,{x}^i) - y^i \|^2 
 \end{align}\\[-1.5em]
Then, the cost-functional for the control of an ensemble of points with partially constrained end-points including regularization is defined as follows:
\begin{align}\label{eqn:cost_cont}\\[-2em]
    \mathcal{J}( u , \mathcal{X} ) := \sum_{i=1}^{q} \| C\varphi(u,x^i) - y^i \|^2   + \lambda \int_{0}^{T} \|u(\tau)\|^2 d\tau\\[-2em]
\end{align}
where $\lambda$ is some regularization coefficient. If we had a single initial point, the minimization of the functional~\eqref{eqn:cost_cont} subject to the dynamic~\eqref{eqn:control_system} would be the Bolza optimal control problem with a fixed time~\cite{liberzon2011calculus}. However, we have an ensemble of points $\mathcal{X}$, which might be a finite set of points or a continuum.

\paragraph{$q$-Folded Method:}
%We \ebedit{introduce the ensemble of points. A finite ensemble of points is an $q$-tuple $X_0=\begin{bmatrix} (x^1)^\top , (x^2)^\top , \cdots , (x^q)^\top \end{bmatrix}^\top \in \mathbb{R}^{nq}$. We define $\Delta^q \subset \mathcal{M}^q \subseteq \mathbb{R}^{nq}$ the set of $q$-tuples with (at least) two coinciding components (in words, we remove any combination of equal points $x^i = x^j$ for $i \neq j$ from $\mathcal{M}^q$). Let $\mathcal{M}^{(q)}:= \mathcal{M}^{q} \setminus \Delta^q$ be the complement of the set $\Delta^q$ on $\mathcal{M}^q$. Please note that if $\operatorname{dim} \mathcal{M} > 1$, then $\mathcal{M}^{(q)}$ is an open connected subset and submanifold of $\mathcal{M}^{q}$~\cite{agrachev2022control}.} 

The authors in~\cite{agrachev2022control} discuss the following method, known as {\em the $q$-folded method}, to find the control function $u$. This turns the problem into a Bolza problem with the following steps: Copy all the points in $\mathcal{X}$ into an $q$-tuple, denoted by $X_0$. 
% \begin{align}
%     X_0 = \begin{bmatrix} (x^1)^\top , (x^2)^\top , \cdots , (x^M)^\top \end{bmatrix}^\top \in \mathbb{R}^{nM}
% \end{align}
Similarly, stack all the output points into a vector of dimension $n_oq$, denoted by $Y$, and copy $n$-dimensional dynamics in~\eqref{eqn:control_system} $q$-times, creating an $nq$-dimensional vectors $F(u(t),X(t))$. Then, they find the flow ${\Vec{\varphi}}_t(u,\cdot):\mathbb{R}^{nq} \times \mathbb{R} \to \mathbb{R}^{nq}$, by solving the minimization problem associated with the following functional for a finite time $T$:
\begin{equation}\label{eqn:cost_ensemble}\\[-1em]
    \mathcal{J}(u, X_0 ):= \|\Lambda(C)\Vec{\varphi}_T(u,X_0) - Y \|^{2} + \int_{0}^{T} \|u(\tau) \|^2 d\tau  
\end{equation}
where $\Lambda(C)\!:=\!\operatorname{diag}(C,\!\cdots\!,C) \in \mathbb{R}^{n_0q \times nq}$ subject to $\dot{X}(t) = F(u(t),X(t))$, $X(0)=X_0$. The stacked structure of $\dot{X}(t)$ and the definition of $\Lambda(C)$ ensure that the flow $\vec{\varphi}_t(u,\cdot)$ is constructed by concatenating $q$ instances of the flow $\varphi_t(u,\cdot)$. 

One can easily see that $q$-folded method requires solving a boundary value problem on $qn$ dimension. Solving a boundary value problem by using the shooting method requires solving backward-forward initial value problem for CODE~\cite{keller1976numerical} at each iteration. An explicit solver of an initial value problem for CODE on $\mathbb{R}^n$ has the complexity of $\mathcal{O}(n^2 N)$ per iteration with $N$ time-step discretization. Consequently, the $q$-folded method has complexity of $\mathcal{O}( n^2 q^2 N )$ per iteration. Therefore, as the size of the training set, $q$, grows, the complexity of the $q$-folded method increases quadratically.

\section{Main Results}
In this section, we introduce the main result of this paper.
\vspace{-4mm}
\paragraph{Controllability on Partially Constrained Ensembles:} We first state a simple condition for the existence of a control function $u$ that memorizes a given ensemble $(\mathcal{X},\mathcal{Y})$. 
To be more precise, we introduce the following subspaces of our control space:
\begin{align}\\[-2.5em]
        U(x^i,y^i) := \{ u \in L_{\infty}([0,T], \mathbb{R}^{n\times n} )| R(\varphi(u,x^i))=y^i\}\\[-2em]
\end{align}
Then, we need to show that there exists a control function $u \in \bigcap_{i=i}^q U(x^i,y^i)$ to prove the model has memorization property. 

Let $g_1$ and $g_2$ be differentiable vector fields in $\mathcal{M} \in \mathbb{R}^{n}$. We call the Lie bracket of $g_1$ and $g_2$~\cite{brockett2014early}, denoted by $[g_1,g_2](x)$ , the vector field:
\begin{align}\\[-1.5em]
    [g_1,g_2](x) := \frac{\partial g_2(x)}{\partial x} g_1(x) - \frac{\partial g_1(x)}{\partial x} g_2(x) , \mbox{for } x\in \mathcal{M} \\[-1.5em]
\end{align}

% \textcolor{red}{It is based on the following sufficient condition for the controllability of an ensemble of points with the fixed end-points.
% \textcolor{red}{
% \begin{proposition}\label{prop:ensemble_cont}\cite[Prop 3.1]{agrachev2020control} Assume that we have $x^i\neq x^j$ and $y^i\neq y^j$ for $i\neq j$. Both ensembles $\mathcal{X}$ and $\mathcal{Y}$ are embedded in $\mathbb{R}^{n}$. If a control affine system as in form~\eqref{eqn:control_system} is bracket-generating in $\mathbb{R}^{n}$, then the ensemble $\mathcal{X}$ can be steered in time $T$ to the ensemble $\mathcal{Y}$ by the control affine system in~\eqref{eqn:control_system} for some finite time $T\geq0$.
% \end{proposition}}

% \textcolor{red}{
% Proposition~\ref{prop:ensemble_cont} shows that there exists a control function $u$, by which the control system~\eqref{eqn:control_system} memorizes the ensemble $(\mathcal{X},\mathcal{Y})$. This holds true when both ensembles $\mathcal{X}$ and $\mathcal{Y}$ consist of pairwise distinct points and the set of given vector fields is bracket-generating in $\mathbb{R}^{n}$. We adapt  Proposition~\ref{prop:ensemble_cont} to our setting as follows:}

% $$
% \mathcal{F}^k=\mathcal{F}^{k-1} \cup\left\{ \frac{\partial g_i(x)}{\partial x} g_j(x) - \frac{\partial g_j(x)}{\partial x} g_i(x)  \mid g_i, g_j \in \mathcal{F}^{k-1} \right\}
% $$

\begin{definition}[Control distributions]\label{defn:control_dist} Let $\dot{x}=f(x,u), u \in L_{\infty}([0,T],\mathbb{R}^{n\times n})$, we associate to it the sets of vector fields defined recursively
\begin{align}\\[-2em]
    \mathcal{F}^k=\mathcal{F}^{k-1} \cup\left\{  [g_i,g_j](x)  \mid g_i, g_j \in \mathcal{F}^{k-1} \right\} \\[-2em]
\end{align}
for $k\geq1$ with the set of control vectors $\mathcal{F}^0=\{ f(x,u) | u \in  L_{\infty}([0,T], \mathbb{R}^{n\times n}) \}$. The corresponding distributions at point $x \in \mathbb{R}^n$ are then
\begin{align}\\[-2em]
    \mathcal{D}_x^k(\mathcal{F})=\operatorname{span}\left\{g(x) \mid g \in \mathcal{F}^k\right\}\\[-2em]
\end{align}
\end{definition}

 % The distribution is said to be involutive if any two vector fields taking on values in the distribution has the property that their Lie bracket also takes on values in this distribution, that is, the control distribution
 
 One can see that the distributions are nested $\mathcal{D}_x^k(\mathcal{F}) \subseteq \mathcal{D}_x^{k+1}(\mathcal{F})$. The control distribution $\mathcal{D}_x^k(\mathcal{F})$ is said to be involutive  if $\mathcal{D}_x^k(\mathcal{F}) = \mathcal{D}_x^{k+1}(\mathcal{F})$~\cite{brockett2014early}. We call $\mathcal{F}$ {\em bracket-generating} if $\mathcal{D}_x^\infty(\mathcal{F})$ spans $T_x E(\mathcal{M})$ for all $x\!\in\!E(\mathcal{M})$.  
% \begin{definition}[Bracket-Generating]\label{defn:bracket_generating}We call $\mathcal{F}$ {\em bracket-generating} if $\mathcal{D}_x^\infty(\mathcal{F})$ spans $T_x \mathcal{M}$ for all $x\!\in\!\mathcal{M}$.    
% \end{definition}

We denote $q$ times Cartesian product of $E(\mathcal{M})$ by $E(\mathcal{M})^q:= E(\mathcal{M}) \times \cdots \times E(\mathcal{M})$. A finite ensemble of points is a $q$-tuple $X=\begin{bmatrix} E(x^1)^\top , \cdots , E(x^q)^\top \end{bmatrix}^\top \in E(\mathcal{M})^q \subseteq \mathbb{R}^{nq} $. We define the set $\Delta^q :=\{ [ E(x^1)^\top , \cdots , E(x^q)^\top ]^\top \in E(\mathcal{M})^q \rvert $ $E(x^i) = E(x^j) \mbox{ for } i \neq j \} $. Let $E(\mathcal{M})^{(q)}:= E(\mathcal{M})^{q} \setminus \Delta^q$ be the complement of   $\Delta^q$ on $E(\mathcal{M})^q$. Please note that if $\operatorname{dim} E(\mathcal{M}) > 1$, then $E(\mathcal{M})^{(q)}$ is an open connected subset and submanifold of $E(\mathcal{M})^{q}$~\cite{agrachev2022control}.

% From Definition~\ref{defn:bracket_generating}, 

The set of control vectors of the $q$-folded system, denoted by $\tilde{\mathcal{F}}^0$, is then
\begin{align} \\[-1.75em]
   \tilde{\mathcal{F}}^0=\{ [f(u,x^1)^\top,f(u,x^2)^\top,\cdots,f(u,x^q)^\top]^\top \in \mathbb{R}^{nq} | u \in  L_{\infty}([0,T], \mathbb{R}^{n\times n})\}.  \\[-1.75em]
\end{align}
% The corresponding control distribution of $q$-folded system at $X=\begin{bmatrix} (x^1)^\top , (x^2)^\top , \cdots , (x^q)^\top \end{bmatrix}^\top$, denoted by $\mathcal{D}_X^k(\tilde{\mathcal{F}})$, 
% % \begin{equation}\label{eqn:folded_dist}
% %   \mathcal{D}_X^k(\tilde{\mathcal{F}})=\operatorname{span}\left\{g(X) \mid g \in \tilde{\mathcal{F}}^k\right\} 
% % \end{equation}
% where \begin{align} \\[-1.75em]
%    \tilde{\mathcal{F}}^0=\{ [f(u,x^1)^\top,f(u,x^2)^\top,\cdots,f(u,x^q)^\top]^\top \in \mathbb{R}^{nq} | u \in  L_{\infty}([0,T], \mathbb{R}^{n\times n})\}.  \\[-1.75em]
% \end{align}
We note that the control function $u$ still belongs to $L_{\infty}([0,T], \mathbb{R}^{n\times n})$, and not $L_{\infty}([0,T], \mathbb{R}^{nq\times nq})$. Therefore, having $\mathcal{D}_X^\infty(\tilde{\mathcal{F}})=T_{X} E(\mathcal{M})^{(q)} (= \otimes_{i=1}^q T_{x^i} E(\mathcal{M}))$ for all $X \in E(\mathcal{M})^{(q)}$ is stronger requirement than having $\mathcal{D}_x^\infty({\mathcal{F}})=T_{x} E(\mathcal{M})$ for all $x \in E(\mathcal{M})$.

\begin{lemma}\label{lem:ensemble_cont} Assume that the ensemble $\mathcal{X}$ consists of finite pairwise distinct points and $n > n_o$. For the readout map $R(x)=Cx$, if the set of control vector fields of $q$-folded system is bracket-generating in $E(\mathcal{M})^{(q)}(= E(\mathcal{M})^q \setminus \Delta^q)$, then there exists a control function $u$ and a finite time $T\geq0$ such that the system~\eqref{eqn:control_system} memorizes the ensemble $(\mathcal{X},\mathcal{Y})$ by the control function $u$. 
\end{lemma}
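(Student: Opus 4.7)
The plan is to reduce the partially constrained memorization statement to a standard controllability result for the $q$-folded system by exploiting the slack coordinates left unconstrained by the readout map $R(x)=Cx$, and then invoke the Chow--Rashevsky theorem.

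First, I would recast the memorization requirement as an exact steering problem for the $q$-folded dynamics. Let $X_0=[E(x^1)^\top,\dots,E(x^q)^\top]^\top$. By the pairwise-distinctness hypothesis on $\mathcal{X}$ together with injectivity of $E$, we have $X_0\in E(\mathcal{M})^{(q)}$. Memorizing $(\mathcal{X},\mathcal{Y})$ with control $u$ and horizon $T$ is equivalent to the existence of a terminal state $X_T=[\bar y^1{}^\top,\dots,\bar y^q{}^\top]^\top \in E(\mathcal{M})^{(q)}$ such that $C\bar y^i = y^i$ for every $i$ and $\vec\varphi_T(u,X_0)=X_T$.

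Second, I would exhibit such an $X_T$. Because $n>n_o$, the fiber $\{z\in\mathbb{R}^{n}\mid Cz=y^i\}$ is an affine subspace of positive dimension $n-n_o\geq 1$. Since $E(\mathcal{M})$ is a submanifold of dimension $n$ (as $E$ is an injective map onto a connected Riemannian manifold), the intersection of this fiber with $E(\mathcal{M})$ is non-empty and contains open pieces, giving uncountably many candidates for each $\bar y^i$. Choosing the $\bar y^i$ one at a time, the ``forbidden'' sets that would violate pairwise distinctness are finite unions of lower-dimensional slices, hence the selection can be made so that $\bar y^i\neq \bar y^j$ for $i\neq j$. Thus $X_T\in E(\mathcal{M})^{(q)}$.

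Third, I would invoke controllability. The subset $E(\mathcal{M})^{(q)}$ is, as recalled right before the lemma statement, an open connected submanifold of $E(\mathcal{M})^{q}$. The hypothesis that $\tilde{\mathcal{F}}$ is bracket-generating on $E(\mathcal{M})^{(q)}$ means that the control distribution of the $q$-folded system has full rank at every point of this connected manifold. The Chow--Rashevsky theorem then yields controllability of the $q$-folded system on $E(\mathcal{M})^{(q)}$: there exists $u\in L_\infty([0,T],\mathbb{R}^{n\times n})$ and a finite $T\geq 0$ steering $X_0$ to the target $X_T$ constructed above. By construction $\Lambda(C)\vec\varphi_T(u,X_0)=Y$, i.e., $C\varphi_T(u,x^i)=y^i$ for all $i$, so the system has memorized $(\mathcal{X},\mathcal{Y})$.

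The main obstacle I anticipate is the selection step for $X_T\in E(\mathcal{M})^{(q)}$: one must ensure simultaneously that each component lies in $E(\mathcal{M})$, satisfies the readout constraint, and that all components are pairwise distinct. The first two are handled by a dimension count using $n>n_o$; the third exploits the genericity of pairwise distinctness in a manifold of dimension at least one. Once $X_T$ is fixed, the remainder is a direct application of Chow--Rashevsky, so no further technical work is required.
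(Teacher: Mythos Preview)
Your proposal is correct and follows essentially the same route as the paper's proof: lift the output labels $y^i\in\mathbb{R}^{n_o}$ to pairwise distinct targets $\bar y^i\in\mathbb{R}^{n}$ with $C\bar y^i=y^i$ (using $n>n_o$ to guarantee enough room), and then invoke controllability of the $q$-folded system on the connected manifold $E(\mathcal{M})^{(q)}$. The only cosmetic difference is that the paper packages the controllability step as a citation to~\cite[Prop.~6.1]{agrachev2020control} and splits into the two cases ``all $y^i$ distinct'' versus ``not'', whereas you treat both cases uniformly and name Chow--Rashevsky explicitly; the substance is identical.
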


\begin{proof}
First consider that  $i \neq j \Leftrightarrow y^i\neq y^j$. In this case,  the result trivially follows from~\cite[Prop 6.1]{agrachev2020control}. 
Now assume that it is not the case; then, since the cardinality of $\mathcal Y$ is finite, we can always construct a set $\mathcal{Y}'\subset \mathbb{R}^{n}$, with elements $\tilde{y}^i$ having the property that  $i \neq j \Leftrightarrow \tilde{y}^i\neq \tilde{y}^j$ and $R(\tilde{y}^i)=y^i$ for all $i$. We can then appeal to~\cite[Prop 6.1]{agrachev2020control} to conclude.\qed\end{proof}

% It is worth noting that~\cite[Prop 6.1]{agrachev2020control} is restatement of Rashevsky-Chow Theorem in geometric control theory literature~\cite{agrachev2013control}. Even if the component-wise Lie brackets holds for $q$-folded system, that is, $[g_1^q,g_2^q](x)=[g_1,g_2]^q(x)$ where $g_1(x)=f(u_1 x)$ and $g_2(x)=f(u_2 x)$ for $u_1,u_2 \in \mathbb{R}^{n \times n}$, it is clear that having a control distribution generated by the stacked control vector fields $\tilde{\mathcal{F}}^0=\{ [f(u,x^1)^\top,f(u,x^2)^\top,\cdots,f(u,x^q)^\top]^\top | u \in  L_{\infty}([0,T], \mathbb{R}^{n\times n})\}$ such that it is bracket-generating in $\mathcal{M}^{(q)}$ is stronger notion then having a control distributions generated by $\mathcal{F}$ (in Defn.~\ref{defn:control_dist}) is bracket-generating in $\mathcal{M}$. 
% \begin{remark}
% \ebedit{As an easy corollary of Regular Value Theorem~\cite{guillemin2010differential}, Lemma~\ref{lem:ensemble_cont} holds for any $R:\mathbb{R}^n \to \mathbb{R}^{n_o}$ so that $dR_x := \frac{\partial R}{\partial x}\rvert{x}$ of full row rank for all $x \in \mathbb{R}^n$.  }  
% \end{remark}
\vspace{-4mm}
\paragraph{Tuning without forgetting:} We describe here the core of our approach to developing numerical methods. 
% Due to our focus on algorithms and because of space constraints, we \ebedit{only} provide Theorem that show that all operators and subspaces are nice (continuous and closed in particular). We will address their \ebedit{proof} general cases in future work.
Let $\mathcal{X}^j=\{ x^i \in \mathcal{X} | i = 1,2,\cdots,j\}$ be a subset of the ensemble $\mathcal{X}$ (called {\em batch} or {\em sub-ensemble}). Let $\mathcal{Y}^j$ be the corresponding batch of labels. We denote by a superscript the value of the control at a given iteration, i.e.,  $u^k$ is the control function at the $k$th iteration of the algorithm. Assume that $u^k$ has memorized the ensemble $(\mathcal{X}^j,\mathcal{Y}^j)$ for the model~\eqref{eqn:control_system}. Expand the ensemble by adding the point $x^{j+1}$ with its corresponding label $y^{j+1}$. Clearly, it does not necessarily hold that $C\varphi(u^k,x^{j+1})=y^{j+1}$. We propose an iterative method to find a control $u^*$ such that $C\varphi(u^*,x^{i})=y^{i}$ for all $x^i \in \mathcal{X}^{j+1}(=\mathcal{X}^{j} \cup \{x^{j+1}\})$.

\begin{definition}\label{defn:lwf} Consider an ensemble $(\mathcal{X},\mathcal{Y})$. Assume that the control $u^k$  has memorized the sub-ensemble $(\mathcal{X}^j,\mathcal{Y}^j)$ for~\eqref{eqn:control_system} for some $j<|\mathcal{I}|$. If the update $\delta u^k$ satisfies the following: 
\begin{enumerate}
\item\label{itm:cond_1} $ \mathcal{J}^{j+1}(u^{k}+\delta u^k) \leq \mathcal{J}^{j+1}(u^k) $
\item\label{itm:cond_2} $R\left( \varphi(u^{k}+\delta u^k,x^i) \right) =y^i + o(\delta u^k), \forall x^i \in \mathcal{X}^j$ 
\end{enumerate}
then the control function $u^{k+1}(:=u^k+\delta u^k)$  has been {\em tuned} for $\mathcal{X}^{j+1}$ {\em  without forgetting} $\mathcal{X}^j$.
\end{definition}

Paraphrasing, the definition says that we need to select an update $\delta u^{k}$ satisfying the following two conditions: $(i)$ it decreases the per-sample cost functional for the additional point $x^{j+1}$, equivalently, the control system with the updated control $u^{k+1}$ steers the point $x^{j+1}$ to a point whose projection onto the output subspace $\mathbb{R}^{n_o}$ gets closer to the label $y^{j+1}$, and $(ii)$ the points in $\mathcal{X}^j$ are mapped to points whose projection onto the output subspace is within $o(\delta u^k) $ of their corresponding labels. To be more precise, in {\em tuning without forgetting}, we aim to minimize the per-sample cost for the new point $x^{j+1}$, denoted by $\mathcal{J}^{j+1}(u)$, with $u \in \bigcap_{i=1}^j U(x^i,y^i)$.

\paragraph{A projected gradient descent method:} Consider the flow $\varphi_t(u,\cdot)$. It yields the trajectory $t \mapsto \varphi_t(u,x^i)$ of~\eqref{eqn:control_system} with control $u$ and initialized at $x^i$ at $t=0$. The first-order variation of the trajectory $\varphi_t(u,x^i)$ in $\delta u$ is defined as $\delta \varphi_t(u,x^i):=\varphi_t(u+\delta u,x^i)-\varphi_t(u,x^i)$. Under Condition~\ref{itm:cond_2} in Definition~\ref{defn:lwf}, it should hold that $C\delta\varphi_T(u,x^i)=0$ for all $x^i\in \mathcal{X}^j$ up to  first order in $\delta u$. 

% \begin{align}\label{eqn:defn_ltv}
%     \dot{z}(t) =  \frac{\partial f(x,u)}{\partial x} \rvert_{(x=\varphi_{t}(u,x^i),u)}z(t) + \frac{\partial f(x,u)}{\partial u}  \rvert_{(x=\varphi_{t}(u,x^i),u)}.
% \end{align}
It is well known that  $\delta \varphi_t(u,x^i)$ obeys the linear time-varying equation, which is simply the linearization of the control system~\eqref{eqn:control_system} about the trajectory $\varphi_t(u,x^i)$. Thus, we define the following property:
\begin{definition}[Linearized Controllability Property]
 We say the system $\dot x(t)=f(x(t),u(t))$ has the Linearized Controllability Property (LPC) at $x^i$ for all $u \in L_{\infty}([0,T], \mathbb{R}^{n \times n} )$ if the linear time varying system:      
 \begin{align}\label{eqn:defn_ltv}
    \dot{z}(t) =  \left( \frac{\partial f(x,u)}{\partial x} \rvert_{(x=\varphi_{t}(u,x^i),u)} \right)z(t) + \left( \frac{\partial f(x,u)}{\partial u}  \rvert_{(x=\varphi_{t}(u,x^i),u)} \right) v(t),
\end{align}
$ v(t) \in L_{\infty}([0,T], \mathbb{R}^{n \times n} )$ is controllable. 
\end{definition}

% \begin{align}\label{eqn:defn_ltv}
%     \dot{z}(t) =  \left(\sum_{d=1}^p u_d(t) \frac{\partial f_d(x)}{\partial x}  \rvert_{x=\varphi_{t}(u,x^i)} \right) z(t).
% \end{align}

Denote by  $\Phi_{(u,x^i)}(t,\tau)$ the state transition matrix of~\eqref{eqn:defn_ltv}.

\begin{lemma}\label{lem:variation_t}
Suppose that a given control function $u$ has memorized the pair of points $(x^i,y^i)$ for the model~\eqref{eqn:control_system}. Then
% \begin{equation*}
%      \delta \varphi_t(u,x^i) = \int_{0}^{t}  \Phi_{(u,x^i)}(t,\tau) [ f_1(x(\tau) , \cdots , f_p(x(\tau))] \delta u(\tau)  d\tau.
% \end{equation*} 
\begin{equation*}
     \delta \varphi_t(u,x^i) = \int_{0}^{t}  \Phi_{(u,x^i)}(t,\tau)  \frac{\partial f( x, u ) }{\partial u} \rvert_{(x=\varphi_{\tau}(u,x^i),u)}  \delta u(\tau)  d\tau.
\end{equation*}
up to first order in $\delta u(t)$.
\end{lemma}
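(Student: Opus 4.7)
The plan is to derive the linearized (variational) equation satisfied by $\delta \varphi_t(u,x^i)$ and then apply the variation of constants formula. Let $\bar{x}(t) := \varphi_t(u+\delta u, x^i)$ and $x(t) := \varphi_t(u,x^i)$, so that both trajectories start at $x^i$ at time $t=0$ and satisfy
\begin{equation*}
\dot{\bar{x}}(t) = f(\bar{x}(t), u(t)+\delta u(t)), \qquad \dot{x}(t) = f(x(t),u(t)).
\end{equation*}
Set $z(t) := \bar{x}(t) - x(t)$, so in particular $z(0)=0$. Taylor-expanding $f(\bar{x}, u+\delta u)$ about $(x(t),u(t))$ and using smoothness of $f$ gives
\begin{equation*}
\dot{z}(t) = A(t) z(t) + B(t)\,\delta u(t) + r(t),
\end{equation*}
where $A(t) := \frac{\partial f}{\partial x}\big|_{(x(t),u(t))}$, $B(t) := \frac{\partial f}{\partial u}\big|_{(x(t),u(t))}$, and $r(t)$ is a remainder that is $o(\|\delta u\|) + O(\|z(t)\|^2) + O(\|z(t)\|\|\delta u(t)\|)$.

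Next, I would discard the remainder and consider the associated linear time-varying system $\dot{w}(t) = A(t) w(t) + B(t)\,\delta u(t)$ with $w(0) = 0$. This is exactly the linearization~\eqref{eqn:defn_ltv} around the nominal trajectory, so by the classical variation of constants formula for linear time-varying ODEs and the definition of the state transition matrix $\Phi_{(u,x^i)}(t,\tau)$,
\begin{equation*}
w(t) = \int_{0}^{t} \Phi_{(u,x^i)}(t,\tau)\, B(\tau)\, \delta u(\tau)\, d\tau.
\end{equation*}
It then remains to show that $z(t) = w(t) + o(\delta u)$ uniformly on $[0,T]$, so that the first-order variation $\delta \varphi_t(u,x^i)$ indeed coincides with $w(t)$ up to first order in $\delta u$.

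The main obstacle, and really the only nontrivial step, is controlling the remainder $r(t)$. I would do this by a standard Gr\"onwall argument: since $f$ is smooth and the nominal trajectory is bounded on the compact interval $[0,T]$, the second-order Taylor remainder obeys a bound of the form $\|r(t)\| \leq K(\|z(t)\|^2 + \|\delta u(t)\|^2)$ on a neighborhood of the nominal trajectory, and an a priori estimate gives $\|z(t)\| = O(\|\delta u\|_{L_\infty})$ for small enough $\delta u$. Feeding this back into the equation for $e(t) := z(t) - w(t)$, which satisfies $\dot{e}(t) = A(t) e(t) + r(t)$ with $e(0)=0$, and applying Gr\"onwall's inequality yields $\|e(t)\| = o(\|\delta u\|_{L_\infty})$ uniformly in $t \in [0,T]$, which is precisely the claim. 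The hypothesis that $u$ has memorized $(x^i,y^i)$ plays no role in the derivation of the formula itself; it is recorded in the statement because this lemma will be invoked in the subsequent construction of $\delta u^k$ that preserves $C\varphi(u,x^i) = y^i$ to first order.
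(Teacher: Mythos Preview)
Your proposal is correct and follows essentially the same route as the paper: derive the linearized variational equation by Taylor-expanding the perturbed dynamics about the nominal trajectory, then apply the variation of constants formula with the initial condition $z(0)=0$. Your treatment is in fact more careful than the paper's, which simply discards the higher-order terms without the explicit Gr\"onwall estimate you supply; your closing remark that the memorization hypothesis is not actually used in the derivation is also accurate.
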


\begin{proof}
For a small variation of the control function $u(\tau)$, denoted by $\delta u(\tau)$, we have the following:
% \begin{align}\label{eqn:new_flow_dist}
%     \dot{x}(t) + \delta \dot{x}(t) =    \sum_{d=1}^p  \left( u_d(t) + \delta u_d(t) \right) f_d \left( x(t) + \delta x(t) \right) % 
% \end{align}
\begin{align}\label{eqn:new_flow_dist}\\[-2em]
    \dot{x}(t) + \delta \dot{x}(t) =   f \left(x(t) + \delta x(t) ,  u(t) + \delta u(t)  \right)  \\[-2em]
\end{align}
Taking the {\em first} order Taylor expansion of~\eqref{eqn:new_flow_dist} around the trajectory $x(t)=\varphi_t(u,x^i)$ and  subtracting $\varphi_t(u,x^i)$, we get
% \begin{align}\label{eqn:only_delta_x}
%    \frac{d~\delta \varphi_t(u,x^i)}{dt}= \left( \sum_{d=1}^p  u_d \frac{\partial f_d(x)}{\partial x} \right) \delta \varphi_t(u,x^i) +  \sum_{d=1}^p  f_d(x)  \delta u_d 
% \end{align}
\begin{align}\label{eqn:only_delta_x}
   \frac{d~\delta \varphi_t(u,x^i)}{dt}= \frac{\partial f(x,u)}{\partial x}\delta \varphi_t(u,x^i) + \frac{\partial f(x,u)}{\partial u}  \delta u
\end{align}
up to first order in $\delta u$. To emphasize the linearity of the system above in the control update $\delta u$, we can introduce the notation  $z(t)=\delta \varphi_t(u,x^i)$.Then, from the introduced notation, one can easily see that~\eqref{eqn:only_delta_x} matches with~\eqref{eqn:defn_ltv}.
 % $A_d(x) = u_d \frac{\partial f_d(x(t))}{\partial 
 % x}$
% \begin{equation*}\label{eqn:ltv}
%         \dot{z} =   \sum_{d=1}^p A_d(x) z +  \sum_{d=1}^p f_d(x) \delta u_d   
%                 % \dot{z} =   A_1 z + A_2 z  +  \sum_{d=1}^r B_d \delta u_d   + B_2 \delta u_2.
% \end{equation*}
Using the variation of constants formula~\cite{liberzon2011calculus}, we have the following:
% \begin{align*}\label{eqn:const_variation}
%     z(t)= \Phi_{(u,x^i)}(t,0)z(0)+\int_{0}^{t}\Phi_{(u,x^i)}(t,\tau) \mathcal{F}(\tau) \delta u(\tau)~d\tau
% \end{align*}
\begin{align*}\label{eqn:const_variation}
    z(t)= \Phi_{(u,x^i)}(t,0)z(0)+\int_{0}^{t}\Phi_{(u,x^i)}(t,\tau)  \frac{\partial f(x,u)}{\partial u} \rvert_{(x=\varphi_{\tau}(u,x^i),u)} \delta u(\tau)~d\tau
\end{align*}
% where $\mathcal{F}(\tau)=[ f_1(x(\tau)) , \cdots, f_p(x(\tau)) ]$. We have $z(0)=0$ since we have $\varphi_0(u+\delta u,x^i)=x^i$ and $\varphi_0(u ,x^i)=x^i$. This completes the proof.\end{proof}
We have $z(0)=0$ since we have $\varphi_0(u+\delta u,x^i)=x^i$ and $\varphi_0(u ,x^i)=x^i$. This completes the proof.\qed\end{proof}

Based on Lemma~\ref{lem:variation_t}, we define an affine operator from the space of bounded functions over the time interval $[0,T]$, $\delta u(t) \in L_{\infty}([0,T], \mathbb{R}^{n \times n})$, to $\mathbb{R}^{n_o}$, mapping a control variation to the resulting variation in the end-point of the trajectory see through the readout map $R$. This operator is defined as
% \begin{equation}\label{eqn:operator}
%     \mathcal{L}_{(u,x^i)}(\delta u):= C\int_{0}^{T} \Phi_{(u,x^i)}(T,\tau) F(x(\tau))\delta u(\tau)d\tau %\bigg\rvert_{x(\tau)=\varphi_{\tau}(u,x^j)}
% \end{equation}
\begin{equation}\label{eqn:operator}
    \mathcal{L}_{(u,x^i)}(\delta u):= R \left( \int_{0}^{T} \Phi_{(u,x^i)}(T,\tau)  \frac{\partial f(x(\tau),u(\tau))}{\partial u} \delta u(\tau)d\tau \right)  %\bigg\rvert_{x(\tau)=\varphi_{\tau}(u,x^j)}
\end{equation}
% where $F(x(\tau))=[f_1(x(\tau)),\cdots,f_p(x(\tau))]$ 
for $x(\tau)=\varphi_{\tau}(u,x^i)$. Next, we let 
$$\mathcal{K}(u,x^i):=\operatorname{span}\{\delta u\in L_{\infty}([0,T],\mathbb{R}^{n \times n}) \mid  \mathcal{L}_{(u,x^i)}(\delta u) = 0 \}$$ 
be the kernel of the operator $\mathcal{L}_{(u,x^i)}(\cdot)$. Then, we define the intersection of the kernel $\mathcal{K}(u,x^i)$ for all $x^i\in\mathcal{X}^j$ as follows:
$$
\mathcal{K}(u,\mathcal{X}^j):=\operatorname{span}\{ \delta u \in L_{\infty}([0,T],\mathbb{R}^{n \times n}) \mid \delta u \in\bigcap_{ x^i\in \mathcal{X}^j}\mathcal{K}{(u,x^i)} \}
$$
We define the gradient of the given per-sample cost functional for the sample $x^{i}$ at control $u$, as the first order variation of $\mathcal{J}^{i}(u)$ in $\delta u$, precisely, $ \nabla_{u} \mathcal{J}^{i}(u):= \mathcal{J}^{i}(u+\delta u)-\mathcal{J}^{i}(u)$. Then, one can see that we have the following: 
\begin{equation}
    \nabla_{u(t)} \mathcal{J}^{i}(u) := \delta  \varphi_t^\top(u,x^{i}) C^\top \left( C \varphi(u,x^{i}) - y^{i} \right)
\end{equation}
Note that it is a function of time $t$ via the first-order variation $\delta \varphi_t(u,x^{i})$. We define the projection of $\nabla_{u(t)} \mathcal{J}^{j+1}(u)$ on a given subspace of functions $\mathcal{K}(u,\mathcal{X}^j)$ as the solution of the following optimization problem:
\begin{equation}
   \operatorname{proj}_{\mathcal{K}(u,\mathcal{X}^j)}\nabla_{u(t)} \mathcal{J}^{j+1}(u):={\arg\min}_{d(t) \in \mathcal{K}(u,\mathcal{X}^j)} \int_0^T  \| d(\tau) -  \nabla_{u(\tau)} \mathcal{J}^{j+1}(u) \|^2 d\tau 
\end{equation}

Now, we can state the main result:

\begin{theorem}\label{main:approximation}
Consider  model~\eqref{eqn:control_system} and suppose that the control vector fields of $q$-folded  system~\eqref{eqn:control_system} are bracket-generating in $E(\mathcal{M})^{(q)}$,  and the control function $u^k$ has memorized the ensemble $(\mathcal{X}^j,\mathcal{Y}^j)$. Assume the space of controls that memorize $(\mathcal{X},\mathcal{Y})$ is connected Banach submanifold of $L_{\infty}([0,T],\mathbb{R}^{n \times n})$. If $\delta u^k$ is selected as $\operatorname{proj}_{\mathcal{K}(u,\mathcal{X}^j)}\nabla_{u(t)} \mathcal{J}^{j+1}(u)$, then the control function $u^{k+1}(:=u^k+\delta u^k)$ for $\mathcal{X}^{j+1}$ has been {\em tuned without forgetting} $\mathcal{X}^j$ up to the first order. 
\end{theorem}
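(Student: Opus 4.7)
The plan is to verify the two conditions of Definition~\ref{defn:lwf} for the candidate update $\delta u^k$ separately: Condition~\ref{itm:cond_2} will fall out of kernel membership plus Lemma~\ref{lem:variation_t}, and Condition~\ref{itm:cond_1} will fall out of the usual first-order Taylor expansion of $\mathcal{J}^{j+1}$ together with the orthogonal-projection identity.

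For Condition~\ref{itm:cond_2}, note that by construction $\delta u^k\in\mathcal{K}(u^k,\mathcal{X}^j)$, so $\mathcal{L}_{(u^k,x^i)}(\delta u^k)=0$ for every $x^i\in\mathcal{X}^j$. Combining Lemma~\ref{lem:variation_t} with the definition~\eqref{eqn:operator} of $\mathcal{L}$,
\begin{equation*}
R\bigl(\varphi(u^k+\delta u^k,x^i)\bigr) = R\bigl(\varphi(u^k,x^i)\bigr) + \mathcal{L}_{(u^k,x^i)}(\delta u^k) + o(\delta u^k) = y^i + o(\delta u^k),
\end{equation*}
where the last equality uses the memorization of $(\mathcal{X}^j,\mathcal{Y}^j)$ by $u^k$.

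For Condition~\ref{itm:cond_1}, I would expand $\mathcal{J}^{j+1}$ around $u^k$ in the $L^2$ inner product using the Riesz representative of its functional derivative introduced by the paper,
\begin{equation*}
\mathcal{J}^{j+1}(u^k+\delta u^k) - \mathcal{J}^{j+1}(u^k) = \int_0^T \langle \nabla_{u(\tau)}\mathcal{J}^{j+1}(u^k),\, \delta u^k(\tau)\rangle\, d\tau + o(\delta u^k).
\end{equation*}
Reading the update as a small projected-gradient \emph{descent} step $\delta u^k = -\alpha\,\operatorname{proj}_{\mathcal{K}(u^k,\mathcal{X}^j)}\nabla_{u(\cdot)}\mathcal{J}^{j+1}(u^k)$ for some $\alpha>0$, the orthogonal-projection identity $\langle v,\operatorname{proj}_K v\rangle=\|\operatorname{proj}_K v\|^2$ collapses the first-order change to $-\alpha\,\|\operatorname{proj}_{\mathcal{K}}\nabla\mathcal{J}^{j+1}(u^k)\|_{L^2}^2 \le 0$; for $\alpha$ sufficiently small the leading term dominates the $o(\delta u^k)$ remainder, and Condition~\ref{itm:cond_1} follows.

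The main obstacle I anticipate is making sense of the projection and of the linearization on the Banach submanifold of memorizing controls. The bracket-generating hypothesis on the $q$-folded system enters precisely here, just as in Lemma~\ref{lem:ensemble_cont}: it forces the linearized end-point operator attached to $\mathcal{X}^j$ to be surjective onto $\mathbb{R}^{n_o j}$, so that $\mathcal{K}(u^k,\mathcal{X}^j)$ is a closed, finite-codimension subspace on which the orthogonal projection is well-defined, and so that the assumed Banach submanifold of memorizers has a well-defined tangent space at $u^k$ along which the first-order calculus above is meaningful.
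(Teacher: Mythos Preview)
Your proof follows the same two-step structure as the paper's: Condition~\ref{itm:cond_2} via kernel membership together with Lemma~\ref{lem:variation_t}, and Condition~\ref{itm:cond_1} via the projected-gradient property. You are in fact more careful than the paper on Condition~\ref{itm:cond_1}: the paper simply asserts that the selection ``satisfies Condition~\ref{itm:cond_1}'' without argument, whereas you supply the first-order expansion and the identity $\langle v,\operatorname{proj}_K v\rangle=\|\operatorname{proj}_K v\|^2$, and you correctly read the update with the descent sign $-\alpha$ (as in Algorithm~\ref{alg:kernel_gradient}) so that the leading term is nonpositive.
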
 

We state a result that shows that gradient descent to minimize $\mathcal{J}^{j+1}(u)$ subject to $u \in \bigcap_{i=1}^j U(x^i,y^i)$ is a well-founded approach:

\begin{theorem}\label{thm:banach_manifold}
Assume the system $\dot x(t) = f(x(t), u(t))$ on a manifold $E(\mathcal{M})$ has the LPC for each $x^i$, and that the set of control vector fields of $q$-folded system is bracket-generating in $E(\mathcal{M})^{(q)}$. Assume that $E(\mathcal{M})$ is connected and that the fundamental group $\pi_1(E(\mathcal{M}))=0$. Then, the space of controls that memorize $(\mathcal{X}, \mathcal{Y})$ is connected Banach submanifold of $L_{\infty}([0,T],\mathbb{R}^{n \times n})$ of  finite-codimension. 
\end{theorem}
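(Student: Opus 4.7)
The plan is to identify the set of memorizing controls as the preimage of a single point under a smooth map and to apply the Banach-space implicit function theorem. Let $Y := (y^1,\dots,y^q) \in \mathbb{R}^{n_o q}$ and define the joint end-point map
\begin{equation}
\Psi: L_{\infty}([0,T],\mathbb{R}^{n \times n}) \to \mathbb{R}^{n_o q}, \qquad \Psi(u) := \bigl( C\varphi_T(u,x^1), \dots, C\varphi_T(u,x^q) \bigr),
\end{equation}
so that the set of memorizing controls is $\Psi^{-1}(Y)$. Smoothness of $\Psi$ follows from smooth dependence of the flow of~\eqref{eqn:control_system} on $L_\infty$ controls. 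By the regular value theorem in Banach spaces, if $D\Psi(u)$ is surjective at every $u \in \Psi^{-1}(Y)$, then the preimage is a closed embedded Banach submanifold of codimension $n_o q$; splitting of the kernel is automatic because the range is finite-dimensional.

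The core technical step is verifying this submersion property at every memorizing control. By Lemma~\ref{lem:variation_t}, $D\Psi(u)[\delta u] = (\mathcal{L}_{(u,x^i)}(\delta u))_{i=1}^q$, with kernel equal to the space $\mathcal{K}(u,\mathcal{X})$ introduced in the main text. I would argue surjectivity by duality: if $D\Psi(u)$ failed to be surjective, there would exist a nonzero covector $\eta = (\eta^1,\dots,\eta^q) \in \mathbb{R}^{n_o q}$ annihilating the range, yielding
\begin{equation}
\sum_{i=1}^q (C^\top \eta^i)^\top \Phi_{(u,x^i)}(T,\tau)\,\frac{\partial f}{\partial u}\bigg|_{(\varphi_\tau(u,x^i),u(\tau))} = 0 \quad \text{for a.e. } \tau \in [0,T].
\end{equation}
This is precisely the adjoint condition for the linearization of the $q$-folded system along the trajectory $\vec{\varphi}_\tau(u,X_0)$. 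The LPC hypothesis delivers controllability of each individual linearized subsystem, while the bracket-generating assumption on the $q$-folded distribution $\tilde{\mathcal{F}}$ together with pairwise distinctness of the $x^i$'s forbids any nontrivial joint relation among the covectors, in the spirit of the classical fact that a bracket-generating system admits no nonzero horizontal covector along a controlled trajectory.

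For connectedness, I would exploit the hypothesis $\pi_1(E(\mathcal{M}))=0$. Since $\dim E(\mathcal{M}) > 1$, removing the codimension-$n$ diagonals $\Delta^q$ preserves simple connectivity, so $E(\mathcal{M})^{(q)}$ is itself simply connected. Given $u_0, u_1 \in \Psi^{-1}(Y)$, the corresponding $q$-folded trajectories both begin at $X_0$ and terminate in the fixed set $\{X \in E(\mathcal{M})^{(q)} \mid \Lambda(C) X = Y\}$; a homotopy between such endpoint-constrained paths exists by simple connectivity of $E(\mathcal{M})^{(q)}$ and can be lifted through the submersion $\Psi$ (locally trivial by the Banach implicit function theorem) to a continuous path of controls inside $\Psi^{-1}(Y)$. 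Equivalently, one can invoke the homotopy exact sequence for the Serre fibration $\Psi^{-1}(Y) \hookrightarrow L_\infty \to \mathbb{R}^{n_o q}$ over a neighborhood of $Y$, using that $L_\infty$ is contractible and that $\mathbb{R}^{n_o q}$ is simply connected.

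The main obstacle is pointwise surjectivity of $D\Psi(u)$ at \emph{every} $u \in \Psi^{-1}(Y)$, not merely at generic controls. The bracket-generating assumption alone only yields density of regular controls; ruling out singular (``abnormal'') memorizing controls requires carefully combining LPC (which handles each trajectory individually) with the joint bracket-generation of the $q$-folded system (which handles the coupling across trajectories), and this is where I expect the proof to require the most care.
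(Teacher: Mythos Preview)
The paper does not actually prove Theorem~\ref{thm:banach_manifold}. Immediately after stating it, the authors write: ``Due to our focus on algorithms and because of space constraints, we omit the proof of Theorem~\ref{thm:banach_manifold} here. We will provide it in future work. The reader can take its conclusion as an assumption of Theorem~\ref{main:approximation}.'' There is therefore no proof in the paper to compare your proposal against.

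That said, a few remarks on your outline itself. The overall architecture---realize the memorizing set as $\Psi^{-1}(Y)$ for a smooth end-point map into $\mathbb{R}^{n_o q}$, verify that $D\Psi(u)$ is onto at every point of the fiber, and invoke the Banach regular value theorem---is the natural and expected route, and the codimension you obtain, $n_o q$, is the right one. Your own diagnosis of the main obstacle is accurate: bracket-generation only guarantees that \emph{generic} controls are regular, so promoting this to \emph{every} memorizing control is where the LPC hypothesis must do real work, and your sketch does not yet show how to combine the per-trajectory LPC with the $q$-folded bracket condition to exclude a nontrivial joint covector $\eta$.

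Your connectedness argument has two soft spots. First, the claim that $E(\mathcal{M})^{(q)}$ inherits simple connectivity from $E(\mathcal{M})$ once $\dim E(\mathcal{M})>1$ is false in dimension two (the ordered configuration space of $q$ points in a simply connected surface has the pure braid group as $\pi_1$); you need $\dim E(\mathcal{M})\geq 3$ for the diagonals to have codimension $\geq 3$ and hence not affect $\pi_1$. Second, a submersion onto a finite-dimensional target is locally trivial by the implicit function theorem, but that alone does not make $\Psi$ a Serre fibration over a neighborhood of $Y$, which is what the homotopy exact sequence you invoke requires; patching local trivializations along a path in the base needs an additional compactness or properness-type ingredient that is not automatic in the $L_\infty$ setting. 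A more direct route is to show that any two memorizing controls can be joined by a path of controls whose associated $q$-folded trajectories are homotopic rel endpoints inside $E(\mathcal{M})^{(q)}$, and then use the submersion property only locally along that homotopy.
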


% where $\mathcal{K}=\mathcal{K}(u,\mathcal{X}^j)$
Due to our focus on algorithms and because of space constraints, we omit the proof of Theorem~\ref{thm:banach_manifold} here. We will provide it in future work. The reader can take its conclusion as an assumption of Theorem~\ref{main:approximation}. 

Now, we can state the proof of result used in our algorithm.

% \begin{theorem}\label{main:approximation}
% Suppose that the set of control vector fields of $q$-folded control system~\eqref{eqn:control_system} is bracket-generating in $\mathcal{M}^{(q)}$ and the control function $u^k$ has memorized the ensemble $(\mathcal{X}^j,\mathcal{Y}^j)$~\eqref{eqn:control_system}. If $\delta u^k$ is selected as $\operatorname{proj}_{\mathcal{K}(u,\mathcal{X}^j)}\nabla_{u(t)} \mathcal{J}^{j+1}(u)$, then the control function $u^{k+1}(:=u^k+\delta u^k)$ for $\mathcal{X}^{j+1}$ has been {\em tuned without forgetting} $\mathcal{X}^j$ up to the first order. 
% \end{theorem} 
\begin{proof}[Theorem~\ref{main:approximation}] From Lemma~\ref{lem:ensemble_cont}, we know that there exists a control $u^{k+1}$ such that $$C\varphi(u^{k+1},\mathcal{X}^{j+1})=\mathcal{Y}^{j+1}.$$
From definition of $\mathcal{L}_{(u^k,x^i)}(\cdot)$ and Lemma~\ref{lem:variation_t}, we have:
\begin{align*}
    C\varphi(u^{k}+\delta u^k,x^i) = C\varphi(u^{k},x^i)  +  \mathcal{L}_{(u^k,x^i)}(\delta u^k) ,  \forall x^i \in \mathcal{X}^j
\end{align*}
up to first order in $\delta u^k$. Then, we project $\nabla_u\mathcal{J}^{j+1}(u^k)$ onto $\mathcal{K}(u^k,\mathcal{X}^j)$, that is, we select $\delta u^k$ as $\delta u^k=\operatorname{proj}_{\mathcal{K}(u^k,\mathcal{X}^j)}\nabla_u\mathcal{J}^{j+1}(u^k)$. The selection satisfies Condition~\ref{itm:cond_1} in Definition.~\ref{defn:lwf}. 
% From Theorem~\ref{thm:banach_manifold} and Lemma~\ref{lem:ensemble_cont}, we know that the intersection of kernels $\mathcal{K}(u^k,\mathcal{X}^j)$ is non-empty. 
By the construction of $\mathcal{K}(u^k,\mathcal{X}^j)$  and from the memorization assumption, we have that $\mathcal{L}_{(u^k,x^i)}(\delta u^k)=0,\forall x^i \in \mathcal{X}^j$ and $C\varphi(u^{k},x^i)=y^i,\forall x^i \in \mathcal{X}^j$, respectively. Then, we have:
\begin{align}\label{eqn:cond_2_rewrite}
   C\left( \varphi(u^{k}+\delta u^k,x^i) \right) =   y^i  +   o( \delta u^k ) , \forall x^i \in \mathcal{X}^j  
    % C\left( \varphi(u^{k}+\delta u^k,x^i)  \right)  \varphi(u^{k},x^i) &=  y^i +  o( \delta u^k ) , \forall x^i \in \mathcal{X}^j    
\end{align}
It matches the Condition~\ref{itm:cond_2} in Definition~\ref{defn:lwf}, and this completes the proof.\qed
\end{proof}
We emphasize that we {\em only} consider the intersection of the kernel $\mathcal{K}(u^k,x^i)$ for all the points $x^i \in \mathcal{X}^j$ because we need to keep the first-order variation of the end-point mapping $C(\varphi(u^{k},x^i))$ zero for all points $x^i$ in $\mathcal{X}^j$ while we allow variation on the end-point mapping for the rest of the points. 

\section{Numerical Method for Tuning without Forgetting}
Building on the results of the previous section, we describe in this section a numerical algorithm to {\em tune} a control function {\em without forgetting} $\mathcal{X}^j$ up to the first order. The algorithm comprises three main phases, on which we elaborate below. Assume that a control $u^0$ has memorized  $(\mathcal{X}^j,\mathcal{Y}^j)$ for a given control affine system. Consider an expansion of the ensemble, $\{x^{i}\}_{i=j+1}^q$. For the iteration $k=1$, we let $u^1=u^0$. We assume that $T=1$ without loss of generality. For the sake of exposition, we also rely on an explicit Euler method to solve the ODEs involved; using different solvers would entail a trivial modification. We let $N$ be the number of discretization points.

For the sake of notation simplicity, we assume that the control system is affine in the control: $
     \Dot{x}(t)=\sum_{d=1}^{p} u_d(t) f_d(x(t)) , u_d(t)\in \mathbb{R}, u \in L_{\infty}([0,T], \mathbb{R}^p)$ for some smooth functions $f_d(\cdot) \in \operatorname{Vect}(E(\mathcal{M})), d=1,\cdots,p$. However, one can easily apply the algorithm by vectorizing the control function in the matrix form~\eqref{eqn:control_system}.

We let  $u^k \in \mathbb{R}^{pN}$ be the discretization of a time-varying control function and denote the corresponding discretized trajectory by $\varphi_{[1:N]}(u^k,x^i)\in\mathbb{R}^{n \times N}$. We let ${\delta u^k} \in \mathbb{R}^{pN}$ be the discretization of a time-varying control variation.

\paragraph{Approximation of $\mathcal{L}_{(u,x^i)}(\cdot)$:} We first provide a method to compute a numerical approximation of $\mathcal{L}_{(u^1,x^i)}(\cdot)$ for all $x^i \in \mathcal{X}^j$. This is Algorithm~\ref{alg:state_transisition} below, which computes the numerical approximation of $d_u \varphi(u^k,x^i)$. 
\vspace{-3mm}
\begin{algorithm}[htp]
\caption{Approximation of $\mathcal{L}_{(u,x^i)}(\cdot)$}\label{alg:state_transisition}
\begin{algorithmic}[1]
\STATE { $\mbox{\textbf{Input: }} u, x^i$}
\STATE {$z \gets {\varphi}_{[1:N]}(u,x^i) , \Phi_{N+1} \gets I_{n \times n}  $}    
    \FOR{$\ell =N$ to $1$}    
      \STATE $F[\ell] \gets f(u[\ell] , z[\ell])  $
           % \STATE $F[\ell] \gets  \left[ f_1(z[\ell]) ,  \cdots , f_p(z[\ell]) \right] $
        % \STATE $\Phi_\ell \gets \Phi_{\ell+1} ( I + \frac{ \partial ( F[\ell]~u[\ell p-p+1:\ell p] )}{ \partial x} ) $
        % \color{red}\STATE $\Phi_\ell \gets \Phi_{\ell+1} ( I + \frac{ \partial  f(u[\ell] , z[\ell])}{ \partial x} ) $
        \STATE $\Phi_\ell \gets \Phi_{\ell+1} ( I + \frac{ \partial  F[\ell] }{ \partial x} ) $

    \ENDFOR
    \RETURN $C [\Phi_1 F[1], \Phi_2 F[2] , \cdots, \Phi_N F[N] ] $      
    % \RETURN \color{red}$C [\Phi_1 f(u[1] , z[1]), \Phi_2 f(u[2] , z[2]) , \cdots, \Phi_N f(u[N] , z[N]) ] $      

\end{algorithmic}
\end{algorithm}
\vspace{-3mm}
The algorithm iteratively computes the state transition matrix $\Phi_{(u^k,x^i)}(T,\tau)$ of the system in~\eqref{eqn:defn_ltv} for a given initial point $x^i$ and a control function $u^k$. Then, $\Phi_{(u^k,x^i)}(T,\tau)$ is multiplied with the matrix of vector fields in~\eqref{eqn:control_system} and the product is stored in a matrix, which is the numerical approximation of $d_u \varphi(u^k,x^i)$. Let $L_i\in \mathbb{R}^{n_o \times pN}$ be the output of Algorithm~\ref{alg:state_transisition} for a given $u^k$ and an initial point $x^i$. We  have $\mathcal{L}_{(u^k,x^i)}(\delta u^k) \approx L_i {\delta u^k}$. Then, the right kernel of the matrix $L_i$ is the numerical approximation of $\mathcal{K}{(u^k,x^i)}$.

\paragraph{Phase I:} We now implement the statement of Theorem~\ref{main:approximation}. We state the Phase I in Algorithm~\ref{alg:kernel_gradient}. We need to construct $\mathcal{K}{(u^k,\mathcal{X}^j)}$ for a given $j$. We first compute $L_i$ for all $x^i \in \mathcal{X}^j$ by using Algorithm~\ref{alg:state_transisition}. Then, we column-wise concatenate them and place the concatenated matrix into the first block of $L\in\mathbb{R}^{n_oq \times pN}$ and fill the remaining entries with zeros. One can easily see that the right kernel of the matrix $L$, denoted by $\mathcal{N}(L)$,  is a numerical approximation of $\mathcal{K}{(u^1,\mathcal{X}^j)}$ for a given $j$. Following Theorem~\ref{main:approximation}, we need to project $\nabla_u \mathcal{J}^{j+1}(u^k) \in \mathbb{R}^{pN}$ onto $\mathcal{N}(L)$, which is a linear operation. We have the following for the update control $u^{k+1}$:
\begin{align}\label{eqn:kernel_project}
    u^{k+1} =  u^k - \alpha^k \operatorname{proj}_{\mathcal{N}(L)} \nabla_u \mathcal{J}^{j+1}(u^k) \\[-2em]
\end{align}
where $\alpha^k \in \mathbb{R}^{+}$ is a step size. We repeat the iteration in~\eqref{eqn:kernel_project}, and we compute the matrix $L_{i}$ for $i=1,\cdots,j$ for the control $u^k$ to replace corresponding blocks in the matrix $L$ until the per-sample loss converges. Once it converges, we pick the next point $x^{j+2}$ and compute the matrix $L_{i}$ for $i=1,\cdots,j+1$ to replace corresponding blocks in the matrix $L$. We then continue picking new points.
% $k$ \ebst{reaches a multiple of }$\mathcal{P}$.
% iteration $k$ reaches a multiple of $\mathcal{P}$
% at each multiple of $\mathcal{P}$
% ${L}^{\dagger}$ is the Moore–Penrose inverse of the matrix ${{L}}$ and 
% \begin{algorithm}
% \caption{Phase I: Kernel Projected Gradient Descent}\label{alg:kernel_gradient}
% \begin{algorithmic}[1]
% \STATE {\strut$L_i \gets \mathrm{Algorithm1}(u^0 , x^i),\forall i\in\mathcal{I}^j$}
% \STATE {$i \gets j+1$; $~u^1 \gets u^0$}
% \STATE $L \gets [L_1 ; L_2;\cdots ; L_j ; 0 ; \cdots ; 0 ]$
% \FOR{$k=1$ to $\mathcal{P}(M-j)$}  
%             \STATE $u^{k+1} \gets u^k - \alpha^k \operatorname{proj}_{\mathcal{N}(L)} \nabla_u \mathcal{J}^{i}(u^k)$ 
%             % \IF{$k \bmod \mathcal{P} = 0$}
%             \IF{$k \bmod \mathcal{P} = 0$}
%                 \STATE $i \gets i+1 $
%             \ENDIF
%             \STATE $L_\ell \gets \mathrm{Algorithm1}(u^{k+1},x^\ell), \mbox{for } \ell=1,\cdots,i-1$
%             \STATE $\mbox{Update the corresponding block } L_\ell \mbox{ in } L,\forall \ell \in \mathcal{I}^{i-1}$  
            
%         % \ENDWHILE     
% \ENDFOR
% \RETURN $L,u^k$
% \end{algorithmic}
% \end{algorithm}
\vspace{-3mm}
\begin{algorithm}
\caption{Phase I: Kernel Projected Gradient Descent}\label{alg:kernel_gradient}
\begin{algorithmic}[1]
\STATE {\strut$L_i \gets \mathrm{Algorithm1}(u^0 , x^i),\forall i\in\mathcal{I}^j$}
\STATE {$~u \gets u^0$}

\STATE $L \gets [L_1 ; L_2;\cdots ; L_j ; 0 ; \cdots ; 0 ]$ 
\FOR{$i=j+1$ to $q$}  
            \REPEAT            
            \STATE $u \gets u - \alpha \operatorname{proj}_{\mathcal{N}(L)} \nabla_u \mathcal{J}^{i}(u)$ 
            % \IF{$k \bmod \mathcal{P} = 0$}            
            % \IF{$k \bmod \mathcal{P} = 0$}
            %     \STATE $i \gets i+1 $
            % \ENDIF
            \STATE $L_\ell \gets \mathrm{Algorithm1}(u,x^\ell), \mbox{for } \ell=1,\cdots,i-1$
            \STATE $\mbox{Update the corresponding block } L_\ell \mbox{ in } L,\forall \ell \in \mathcal{I}^{i-1}$  
            \UNTIL{\textbf{convergence}}
        % \ENDWHILE     
\ENDFOR
\RETURN $L,u^k$
\end{algorithmic}
\end{algorithm}
\vspace{-3mm}
\begin{remark}\label{rem:scalability}
If we consider $\mathcal{X}^0$ as an empty set and $u^0=\Vec{0}$, it results in $L$ to be a zero matrix. Thus, one can use the proposed algorithm to design a control function $u$ that sends given initial states to desired final ones. This is an alternative to the $q$-folded method.  
% such that the control system~\eqref{eqn:control_system} memorized the ensemble $\mathcal{X}$ by the control functions $u$. 
\end{remark}

In the next two phases, we focus on minimizing the cost-functional $\mathcal{J}(u,\mathcal{X})$ in~\eqref{eqn:cost_cont}, which comprises two sub-cost functionals: the sum of per-sample costs and the $L^2$ norm of the control $u$. First, we minimize the $L^2$ norm of the control $u$, and then we minimize the sum of per-sample costs. We call the consecutive execution of Phase II and III {\em refinement rounds}. One can employ multiple refinement rounds to reduce the cost. 
\vspace{-1mm}
\paragraph{Phase II} In this phase, we minimize the $L^2$ norm of the control $u$. We project the gradient of the $L^2$ norm of the control function onto the subspace of functions $\mathcal{K}(u^k,\mathcal{X})$ at each iteration. We state Phase II in Algorithm~\ref{alg:kernel_gradient_u_norm}.
\paragraph{Phase III} In this phase, we aim to refine the control $u$ to steer all the points closer to their associated end-points. Let $\mathcal{P}$ be the number of iterations per sample in the algorithm. For a given point $x^i$, we first call Algorithm~\ref{alg:state_transisition} to compute $L_\ell$ for all $\ell \in \mathcal{I} \setminus \{i\}$. Notably, we do not update the kernel for $x^i$. Then, we update the matrix $L$ accordingly and compute the projected gradient. Then, we pick the next point $x^{i+1}$. We repeat these steps until the last point in the training set. Then, we again pick $x^1$ and repeat the steps above $\mathcal{P}$ times.
\vspace{-4mm}
\begin{algorithm}
\caption{Phase II: Regularization}\label{alg:kernel_gradient_u_norm}
\begin{algorithmic}[1]
\STATE {\strut $L,u \gets \mathrm{Algorithm2}()$}
    \FOR{$i=1$ to $q$}
        \REPEAT
        \STATE $L_\ell \gets \mathrm{Algorithm1}(u,x^\ell), \forall x^\ell \in \mathcal{X} $
        \STATE $\mbox{Update the corresponding block } L_\ell \mbox{ in } L,\forall \ell\in\mathcal{I}$
        \STATE $u \gets u -\alpha \operatorname{proj}_{\mathcal{N}(L)}(u)$
        \UNTIL{\textbf{convergence}}
    \ENDFOR    
\RETURN $L, u$
\end{algorithmic}
\end{algorithm}
\vspace{-11mm}
\begin{algorithm}
\caption{Phase III: Refinement }\label{alg:kernel_gradient_3}
\begin{algorithmic}[1]
\STATE {\strut  $L,u \gets \mathrm{Algorithm3}()$}
\FOR{$m=1$ to $\mathcal{P}$ }
\FOR{$i=1$ to $q$}
        % \REPEAT
        \STATE $L_\ell \gets \mathrm{Algorithm1}( u,  x^\ell ) , \forall x^\ell \in \mathcal{X} \setminus \{x^i\}  $
        \STATE $\mbox{Update the corresponding block } L_\ell \mbox{ in } L,\forall \ell \in\mathcal{I}\setminus\{i\}$
        \STATE $u \gets u - \alpha \operatorname{proj}_{\mathcal{N}(L)} \nabla_u \mathcal{J}^{i}(u)$ 
        % \STATE $k \gets k + 1$   
        % \IF{$ k \bmod \mathcal{P} = 0$} 
        %         \STATE $ i \gets i+1 $           
        % \ENDIF
        % \UNTIL{\textbf{convergence}}

\ENDFOR
\ENDFOR
\RETURN $u$
\end{algorithmic}
\end{algorithm}
\vspace{-11mm}
\section{A Computational Example}

In this section, we provide computational examples. We consider the model
\begin{equation}\label{eqn:examp_system}
    \dot{x}(t) = W_2(t)\tanh{\left ( W_1(t)x(t) + b_1(t) \right) } + b_2(t)   
\end{equation}
where $W_1(t),W_2(t) \in L_{\infty}([0,T],\mathbb{R}^{\bar{n} \times \bar{n}})$ and $b_1(t),b_2(t) \in L_{\infty}([0,T],\mathbb{R}^{\bar{n} })$. We discretize trajectories with $N=10$. For a given $x^i \in \mathcal{X} \in \mathbb{R}^2$, we consider the following end-point mapping. 
\begin{align}\\[-2.5em]
    x^i \in \mathbb{R}^{2} \xrightarrow{E} [ {x^i}^\top, 0_{1\times 6}]^\top \in \mathbb{R}^{8}  \xrightarrow{\varphi_T(u,\cdot)} \Bar{y}^i \in \mathbb{R}^{8} \xrightarrow{C=[0,\cdots,0,1]} y^i=   \begin{cases} 
      -1 &, \|x^i\|_2^2 \leq 1 \\
      1 &,   \|x^i\|_2^2 > 1
   \end{cases}\\[-3em]
\end{align}

% We use the up-lift function $E:x^i \in \mathbb{R}^2 \mapsto [ {x^i}^\top, 0_{1\times 6}]^\top \in  \mathbb{R}^{8} $. 
In words, we consider a unit radius ball in $\mathbb{R}^2$ centered at the origin. If a point is inside the ball, we aim to steer it to the hyperplane $x_8=1$, otherwise, to the hyperplane $x_8=-1$ (a similar example is provided in~\cite{dupont2019augmented}). One can see that the set of control vector fields for the (overparametrized) $q$-folded control system~\eqref{eqn:examp_system} is bracket-generating in $E(\mathcal{M})^{(q)}$. We define the average error for a control $u$ on the ensemble $\mathcal{X}$ as 
\begin{align} \\[-2em]
   \mathcal{E}(u,\mathcal{X}):=\frac{1}{|\mathcal{X}|}\sum_{x^i\in\mathcal{X}}\| C\varphi(u,x^i)-y^i\|. \\[-2.6em]
\end{align}
% $$\mathcal{E}(u,\mathcal{X}):=\frac{1}{|\mathcal{X}|}\sum_{x^i\in\mathcal{X}}\| C\varphi(u,x^i)-y^i\|.$$ 
We have a set $\mathcal{X}$ of cardinality $64$ whose elements are indexed by $\mathcal{I}$. First, we have that an initial training set (sub-ensemble) $\mathcal{X}^j$ is given for some $j$. We apply the $q$-folded method in Section~\ref{sec:prelim} to learn these samples and denote the control function that $q$-method gives by $u^0$. Then, we expand the training set $\mathcal{X}^j$ to $\mathcal{X}^i$ by adding samples in the order provided by $\mathcal{I}$. For convenience, we denote the difference of the subensembles $\mathcal{X}^i_j:=\{x^\ell \in \mathcal{X} : j < \ell \leq i \}(=\mathcal{X}^{i}\setminus \mathcal{X}^{j})$.

Due to expansion in the ensemble, we need to tune the control function $u^0$ to learn the new samples without forgetting the previous samples. Thus, we apply the {\em tuning without forgetting $\mathcal{X}^j$} method with multiple refinement rounds, which returns the control function $u^*$. We also compare our results with an existing method in the literature for fine-tuning, namely the {\em Penalty Method}, whose approach is to encourage the tuned control function, denoted by $\tilde{u}$, to remain close to the learned control function $u^0$ according to some metric~\cite{gouk2021distance} by updating the cost functional with a penalty term as follows:
\begin{align*}\label{eqn:local_learning} \\[-2em]
    \mathcal{J}( \Tilde{u} , \mathcal{X} ) := \sum_{i=1}^{M} \| C\varphi(\Tilde{u},x^i) - y^i \|^2   + \lambda \int_{0}^{T} \|\Tilde{u}(\tau)-u^0(\tau)\|^2 d\tau \\[-2em]
\end{align*}
Figure~\ref{fig:results} depicts the average error on different sets as a function of number of rounds for both algorithms. First, we observe that, in Fig~\ref{fig:results}, the average error on the new points $\mathcal{E}(u^0 ,\mathcal{X}^{i}_{j})$ is remarkably higher compared to $\mathcal{E}(u^0,\mathcal{X}^j)$ for any given training set at round $0$ (where $u^*=u^0$ and $\Tilde{u}=u^0$). As expected, it shows that the control $u^0$ has {\em not} learned the additional samples $\mathcal{X}^i_j$ for the given model.

The performance of methods designed to address catastrophic forgetting is measured by two metrics: {\em learning plasticity}, which measures the model's capacity to learn new tasks and {\em memory stability}, defined as the ability of a model to retain knowledge of old tasks~\cite{wang2024comprehensive}. 

For the learning plasticity, we measure the difference in the performance of model on a task between its joint training performance $\mathcal{E}(u^0,\mathcal{X}^i)$ and fine-tuning performance  $\mathcal{E}(\cdot,\mathcal{X}^i_j)$. We observe that both $\mathcal{E}(u^*,\mathcal{X}^i_j)$ and $\mathcal{E}(\tilde{u},\mathcal{X}^i_j)$ for $(i,j)=(64,52),(32,8),(32,25)$ are close to $\mathcal{E}(u^0, \mathcal{X}^i )$. It shows that the control $u^*$ and $\tilde{u}$ has learned the additional samples satisfactorily. For the memory stability, we compare the error on the previously learned points $\mathcal{E}(\cdot,\mathcal{X}^j)$ as the tuning continues. We observe that $\mathcal{E}(u^*,{\mathcal{X}}^j)$ for $j=8,16,25$ is close to $\mathcal{E}(u^0, \mathcal{X}^j)$. It shows that the control $u^*$ keeps the performance on the previously learned points $\mathcal{X}^j$ nearly constants. Comparing our method to the Penalty Method, we observe that $\mathcal{E}(\tilde{u}, \mathcal{X}^j)$ is higher than $\mathcal{E}(u^0, \mathcal{X}^j)$ for any given $j$. It shows that the control function $\Tilde{u}$ has learned the additional points but has forgotten the previously learned points. Thus, the penalty method does not directly address the issue of catastrophic forgetting.
% We note that  any small variation around $u^0$ does not guarantee a small variation or no variation at the end-point mapping at a flow generated by nonlinear dynamics,. 
\begin{figure}[t]
\centering
\subfigure[]
{\includegraphics[width=0.49\linewidth]{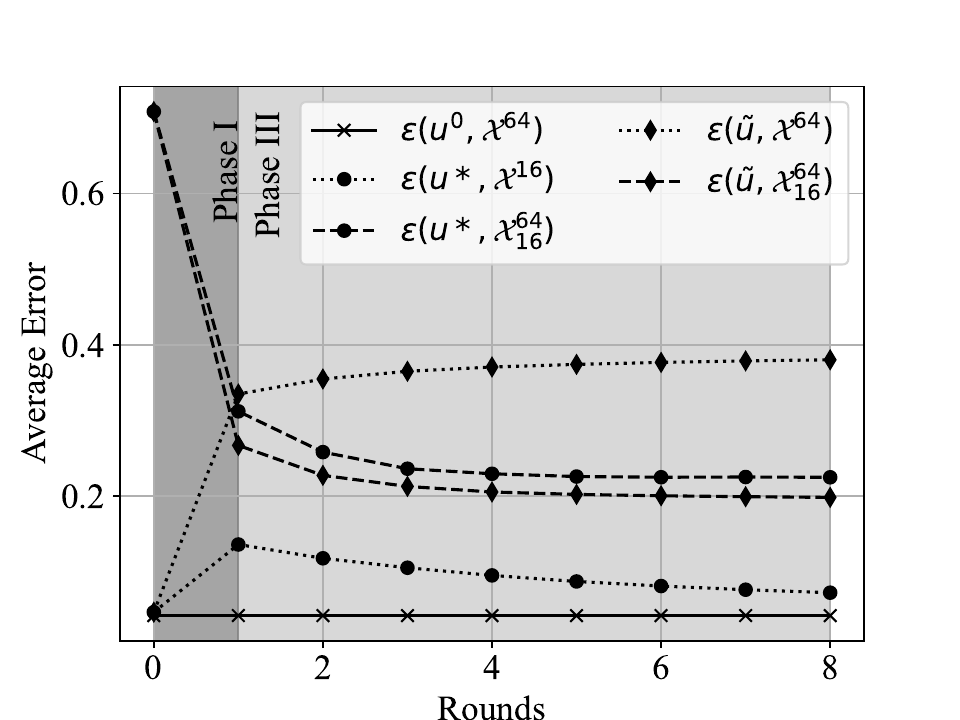}}
\subfigure[]
{\includegraphics[width=0.49\linewidth]{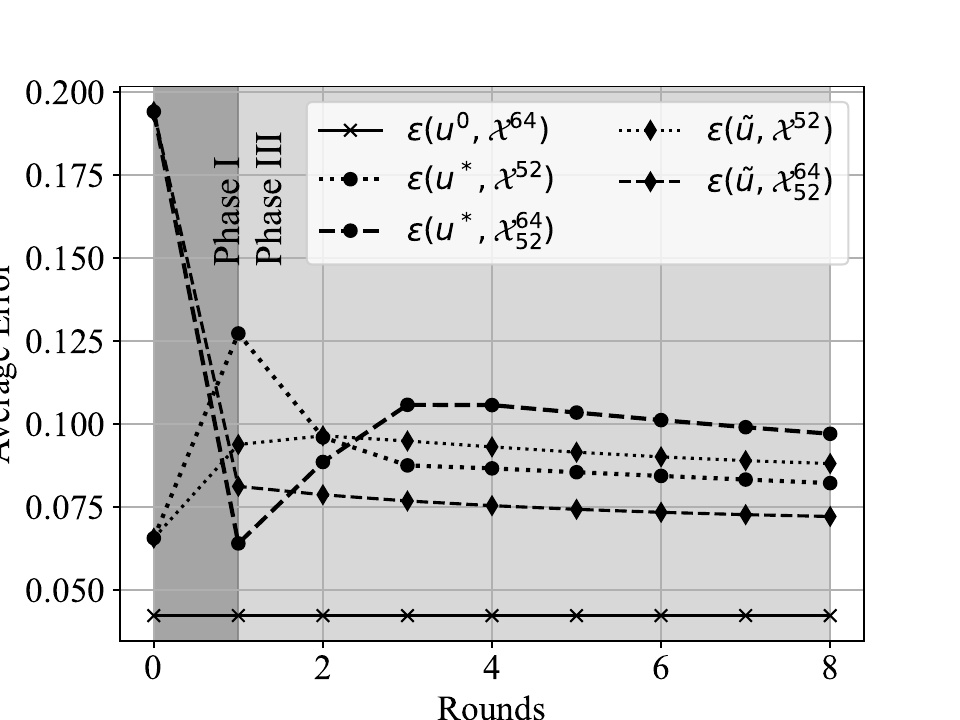}}
\subfigure[]
{\includegraphics[width=0.49\linewidth]{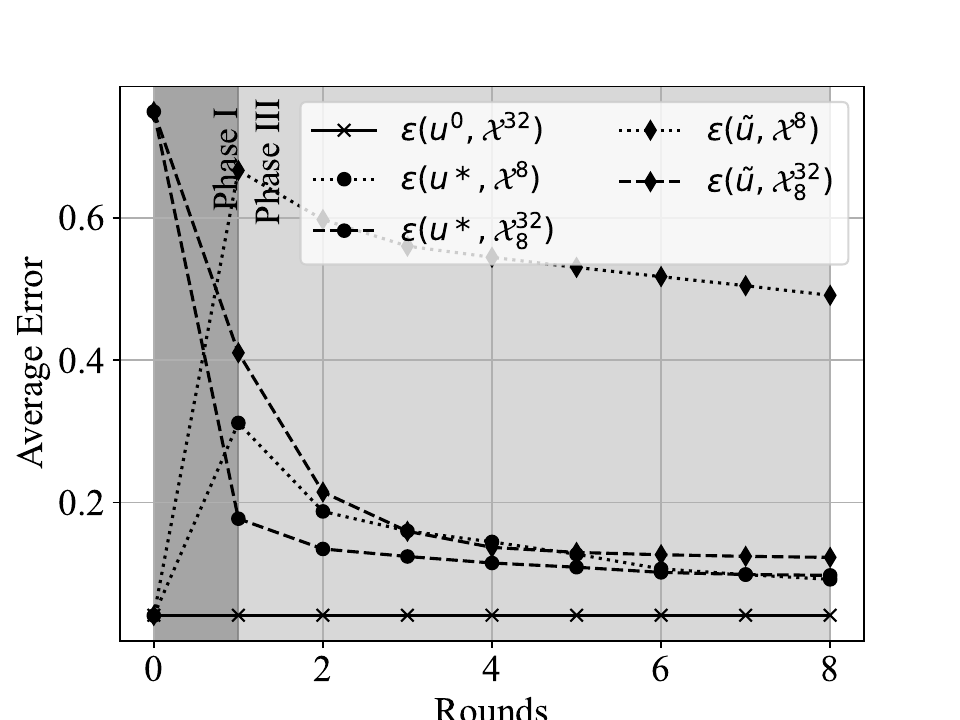}}
\subfigure[]
{\includegraphics[width=0.49\linewidth]{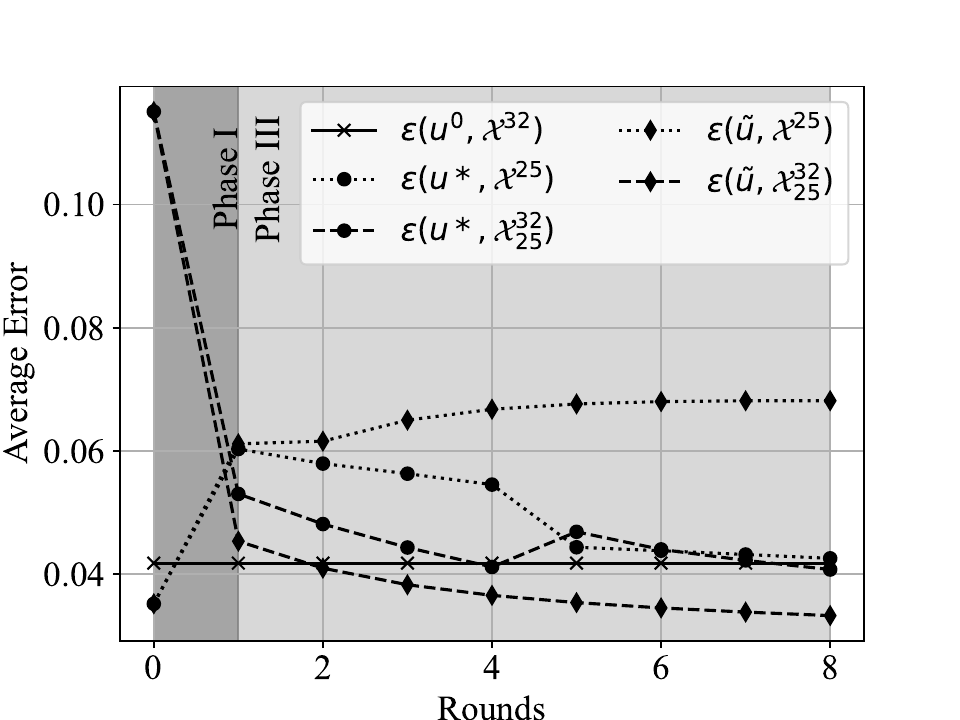}}
\caption{(a) and (b) average error as a function of number of rounds for $|\mathcal{X}|=64$ for $j=16$ and $j=52$, respectively. (c) and (d) average error as a function of number of rounds for $|\mathcal{X}|=32$ for $j=8$ and $j=25$, respectively. The dark gray region is Phase I region and the light gray region is Phase III region (each round is followed by Phase II). Average error on the given set for the control functions $u^*$,$\Tilde{u}$, and $u^0$ are marked by $\bullet,\blacklozenge$, and $\times$, respectively. }\label{fig:results}\vspace{-9mm}
\end{figure}
% On the one hand, we observe that both $\mathcal{E}(u^*,\mathcal{X}^i_j)$ and $\mathcal{E}(\tilde{u},\mathcal{X}^i_j)$ for $(i,j)=(64,16),(32,8),(32,25)$ are close to $\mathcal{E}(u^0, \mathcal{X}^i_j )$ (for test purposes, we also apply $q$-folded methods only the new points $\mathcal{X}^i_j$). It shows that the control $u^*$ and $\tilde{u}$ has learned the additional samples satisfactorily (the difference between these two error is called as learning plasticity~\cite{wang2024comprehensive}). On the other hand, we observe that $\mathcal{E}(u^*,{\mathcal{X}}^j)$ for $j=8,16,25$ is close to $\mathcal{E}(u^0, \mathcal{X}^j)$. It shows that the control $u^*$ keeps the performance on the previously learned points $\mathcal{X}^j$ nearly constants, (the difference between these two error is called as memory stability~\cite{wang2024comprehensive}). Comparing our method to the Penalty Method, we observe that $\mathcal{E}(\tilde{u}, \mathcal{X}^j)$ is higher than $\mathcal{E}(u^0, \mathcal{X}^j)$ for any given $j$. It shows that the control function $\Tilde{u}$ has learned the additional points but has forgotten the previously learned points. One can easily see that, for a flow generated by nonlinear dynamics, every small variation does not guarantee a small variation or no variation at the end-point mapping. Thus, the penalty method does not directly address the issue of catastrophic forgetting.
\vspace{-5mm}
\section{Summary and Outlook}
\vspace{-3mm}
In this work, we have considered a controlled dynamical system to learn a task. We have studied the case when there is an expansion in the training set. The primary issue addressed is the potential loss of model performance on the original task while adapting the model to a new task-specific dataset, which aligns with the catastrophic forgetting. To address this problem, we have introduced a novel fine-tuning method, {\em tuning without forgetting} up to first-order, inspired by control theory. Our work contributes to the scalability of control methods, offering a novel approach to adaptively handle training set expansions. In our numerical results, we have observed that the proposed algorithm effectively handles changes in ensemble cardinalities, preserving previously learned points and adopting new points.

The present work can be extended in several directions. We will relax some of the assumptions for Theorem~\ref{main:approximation} in future work, as described in Theorem~\ref{thm:banach_manifold}. Also, our method has the disadvantage of storing $\mathcal{X}^j$ to compute the set $\mathcal{K}(u, \mathcal{X}^j)$. To address this, one can keep track of the variation in the kernels to approximate the new kernel.

\vspace{-3mm}
\bibliographystyle{ieeetr}
\bibliography{learning.bib}

\end{document}